\begin{document}
\title{Decision Concept Lattice vs. Decision Trees and Random Forests}
\titlerunning{Decision Lattice}
%
\author{Egor Dudyrev\inst{1}\orcidID{0000-0002-2144-3308} \\ 
Sergei O. Kuznetsov\inst{1}\orcidID{0000-0003-3284-9001}}
\authorrunning{E. Dudyrev, S. O. Kuznetsov}
%
\institute{National Research University Higher School of Economics, Moscow, Russia}
\maketitle              
\begin{abstract}
Decision trees and their ensembles are very popular models of supervised machine learning. 
In this paper we merge the ideas underlying decision trees, their ensembles and FCA by proposing a new supervised machine learning model which can be constructed in polynomial time and is applicable for both classification and regression problems. Specifically, we first propose a polynomial-time algorithm for constructing a part of the concept lattice that is based on a decision tree. Second, we describe a prediction scheme based on a concept lattice for solving both classification and regression tasks with prediction quality comparable to that of state-of-the-art models.

\keywords{Concept Lattice  \and Decision Trees \and Random Forest.}
\end{abstract}
\section{Introduction}

In this work we propose an approach to combining the ideas based on concept lattices and  decision trees, which are extensively used in practical machine learning (ML), in order to create a new ML model which generates good classifiers and regressors in polynomial time.

Formal Concept Analysis (FCA) is a mathematically-founded theory well suited for developing models of knowledge discovery and data mining \cite{FCA}, \cite{Genes}, \cite{MiningCare}. 
One of the serious obstacles to the broad use of FCA for knowledge discovery is that the number of formal concepts (i.e. patterns found in a data) can grow exponentially in the size of the data  \cite{KuznetsovCompare}. Sofia algorithm  \cite{Sofia} offers a solution to this problem by constructing only a limited amount of most stable concepts.

Learning decision trees (DT) \cite{DT} is one of the most popular supervised machine learning approaches. 
Most famous methods based on ensembles of decision trees -- aimed at increasing the accuracy of a single tree -- are random forest (RF) \cite{RF} and gradient boosting over decision trees \cite{GB}. 
Both algorithms are considered among the best in terms of accuracy \cite{Catboost}.

There are a number of papers which highlight the connection between the concept lattice and the decision tree. The work \cite{InducingDT} states that a decision tree can be induced from a concept lattice. In \cite{KuznetsovDT} the author compares the ways the concept lattice and the decision tree can be used for supervised learning. Finally, in \cite{Krause2020ALB} the authors  provide a  deep mathematical explanation on the connection between the concept lattice and the decision tree.

In this paper we develop the previous work in a more practical way. We show that 
the decision tree (and its ensembles) can 
induce a subset of concepts of the concept lattice. We propose a polynomial-time algorithm to construct a supervised machine learning model based on a concept lattice with prediction 
quality comparable to that of the state-of-the-art models.

\section{Basic Definitions}

For standard definitions of FCA and decision trees we refer the reader to~\cite{gw99} and~\cite{DT}, respectively.

In what follows we describe algorithms for binary attributes, numerical data can be processed by means of interval pattern structures or can be scaled to binary contexts~\cite{PS_for_complex}.

\section{Construct a Concept Lattice via a set of Decision Trees}

\begin{definition}[Classification rule]
Let $M$ be a set of attributes of a context $\mathbb{K}$ and $Y$ be a set of ``target'' values. A pair $(\rho, \hat{y}_\rho), \rho \subseteq M, \hat{y}_\rho \in Y$ is a classification rule where $\rho$ is called a premise and $\hat{y}_\rho$ is a target prediction.
\end{definition}

Applied to object $g \subseteq G$ it can be interpreted as ``if the description of $g$ falls under the premise $\rho$, then object $g$ should have the target value $\hat{y}_\rho$'' or ``$\text{if } \rho \subseteq g' \Rightarrow \hat{y}_\rho$''.

In the case of classification task $Y$ can be represented either as a set $\{0, 1\}$: $Y = \{ y \in \{0, 1\}\}_{i=1}^{|G|}$ or a set of probabilities of a positive class: $Y = \{y \in [0,1]\}_{i=1}^{|G|}$. In the case of regression task target value $Y$ is a set of real valued numbers: $Y = \{y \in \mathbb{R}\}_{i=1}^{|G|}$.

We can define a decision tree $DT$ as a partially ordered set (poset) of classification rules:

\begin{equation}
     DT \subseteq \{ (\rho, \hat{y}_{\rho}) \mid \rho \subseteq M, \hat{y}_\rho \in Y \}
\end{equation}
where by \emph{the order of classification rules} we mean the inclusion order on their premises:
\begin{equation}
    (\rho_1, \hat{y}_{\rho_1}) \leq (\rho_2, \hat{y}_{\rho_2}) \Leftrightarrow \rho_1 \subseteq \rho_2 
\end{equation}

Here we assume that a decision tree is a binary tree, i.e. its node is either a leaf (has no children) or has exactly 2 children nodes.

The other property of a decision tree is that each premise of its classification rules describes its own unique subset of objects:
\begin{equation}
\label{eq:unique_clfrule}
    \forall (\rho_1, \hat{y}_{\rho_1}) \in DT, \nexists (\rho_2, \hat{y}_{\rho_2}) \in DT: \rho_1' = \rho_2'
\end{equation}

These simple properties result in an idea that 1) we can construct a concept lattice by closing premises of a decision tree, 2) join semilattice of such concept lattice is isomorphic to a decision tree.

\begin{proposition}
\label{prop:lattice_by_dt}
Let $\mathbb{K}=(G,M,I)$ be a formal context, $L(\mathbb{K})$ be a lattice of the context $\mathbb{K}$. A subset of formal concepts $L_{DT}(\mathbb{K})$ 
forming a lattice can be derived from the decision tree $DT(\mathbb{K})$ constructed from the same context as:
\begin{equation}
    L_{DT} = \{ (\rho', \rho'') \mid \forall (\rho', \hat{y}_\rho ) \in DT(\mathbb{K}) \} \cup \{ (M', M) \}
\end{equation}
\end{proposition}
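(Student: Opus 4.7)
The proof splits into two halves: showing that each listed pair is a formal concept of $\mathbb{K}$, and showing that the collection $L_{DT}$ actually forms a lattice.

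The first half is immediate from the Galois-connection identity $\rho''' = \rho'$: for any premise $\rho \subseteq M$, the pair $(\rho', \rho'')$ satisfies $(\rho')' = \rho''$ and $(\rho'')' = \rho''' = \rho'$, so it is a formal concept of $\mathbb{K}$. The adjoined element $(M', M)$ is the standard bottom concept of $L(\mathbb{K})$, so every element of $L_{DT}$ sits inside the full concept lattice.

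For the lattice structure I would argue that $L_{DT}$ is a meet-semilattice with a top, which in the finite case is automatically a lattice. A top is supplied by the root of $DT$, whose premise is $\emptyset$ and which therefore yields the concept $(\emptyset', \emptyset'') = (G, G')$, the top of $L(\mathbb{K})$. To obtain closure under the meet of $L(\mathbb{K})$, I would take two rules $(\rho_1, \hat{y}_{\rho_1}), (\rho_2, \hat{y}_{\rho_2}) \in DT(\mathbb{K})$ and split into cases based on the relative position of their nodes in the tree. If one node is an ancestor of the other then, without loss of generality, $\rho_1 \subseteq \rho_2$, so $\rho_2' \subseteq \rho_1'$ and the meet in $L(\mathbb{K})$ equals $(\rho_2', \rho_2'') \in L_{DT}$. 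If the nodes sit on incompatible branches, then walking up to their least common ancestor shows that one premise contains a literal $m$ while the other contains the complementary literal $\neg m$, hence $\rho_1' \cap \rho_2' = \emptyset$ and the meet collapses to $(\emptyset, M) = (M', M)$, which has been explicitly adjoined. Pairs involving $(M', M)$ are handled by absorption.

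The delicate step is the incompatible-branches case, where I have to show the intersection of extents is genuinely empty. This relies on the binary-attribute scaling assumption recalled just before the proposition: each binary feature contributes to $M$ two attributes that are mutually exclusive on every object, so sibling subtrees of $DT$ partition the object set. The uniqueness-of-extents property from equation~\eqref{eq:unique_clfrule} then guarantees that distinct nodes of $DT$ also give distinct concepts in $L_{DT}$, so the listed family is enumerated without collision. Once meet-closure is established, joins inside $L_{DT}$ are recovered automatically by $a \vee b = \bigwedge \{ c \in L_{DT} : c \geq a,\ c \geq b \}$, completing the proof and incidentally realising the ``join semilattice is isomorphic to a decision tree'' remark made just above the statement.
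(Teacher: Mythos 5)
The paper states Proposition~\ref{prop:lattice_by_dt} without any proof at all (the only written proof is for Proposition~\ref{prop:dt_semlattice}), so your argument is filling a genuine gap rather than paralleling an existing one. Your proof is correct. The first half is the standard Galois identity $\rho'''=\rho'$, and the adjoined $(M',M)$ is indeed the bottom of $L(\mathbb{K})$. For the second half, your case split --- one node an ancestor of the other versus nodes on incompatible branches, with the least common ancestor supplying complementary literals $m$ and $\neg m$ --- is exactly the dichotomy the authors themselves use when proving Proposition~\ref{prop:dt_semlattice}, so your argument is consistent with the paper's implicit assumptions (binary splits on complementarily scaled attributes). Two small points are worth making explicit. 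First, your identification $(\emptyset, M) = (M', M)$ in the disjoint-extents case is automatic: since $M' \subseteq \rho_1'$ and $M' \subseteq \rho_2'$, disjointness of the two extents forces $M' = \emptyset$, so you do not even need to invoke the scaling assumption for that particular equality. Second, your claim that the root of $DT$ contributes the top concept $(\emptyset', \emptyset'') = (G, G')$ rests on the convention that the root (with empty premise) is itself a classification rule of $DT$; the paper never says this explicitly, so it deserves a sentence. Note also that what you prove is slightly stronger than what is claimed: you show $L_{DT}$ is closed under the meet of $L(\mathbb{K})$, whereas the proposition only asserts that $L_{DT}$ forms a lattice under the induced order (with joins recomputed internally, as you correctly do via $a \vee b = \bigwedge\{c \in L_{DT} : c \geq a,\ c \geq b\}$). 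The uniqueness-of-extents observation from equation~\eqref{eq:unique_clfrule} is a pleasant remark but is not actually needed for the lattice claim.
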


\begin{proposition}
\label{prop:dt_semlattice}
Join-semilattice of a concept lattice $L_{DT}$ is isomorphic to the decision tree $DT$.
\end{proposition}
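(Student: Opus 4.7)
My plan is to define a bijection $\phi:DT\to L_{DT}\setminus\{(M',M)\}$ by $\phi(\rho,\hat{y}_\rho)=(\rho',\rho'')$ and to verify that it carries the tree structure of $DT$ to the join-semilattice structure of $L_{DT}$, with the concept order being the reverse of the tree order (the standard convention in FCA). Surjectivity onto $L_{DT}\setminus\{(M',M)\}$ is immediate from the defining equation of $L_{DT}$ in Proposition~\ref{prop:lattice_by_dt}, and injectivity follows directly from the uniqueness property~(\ref{eq:unique_clfrule}): if $\phi(\rho_1,\hat{y}_1)=\phi(\rho_2,\hat{y}_2)$ then in particular $\rho_1'=\rho_2'$, which by~(\ref{eq:unique_clfrule}) forces the two rules to coincide.

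For the order, monotonicity of the derivation operator immediately gives $\rho_1\subseteq\rho_2\Rightarrow \rho_2'\subseteq\rho_1'$, so $\phi$ reverses order. The converse relies on the binary-tree structure: two DT rules that are incomparable in the tree sit in disjoint subtrees and therefore have disjoint object extents, so neither extent can contain the other. Combined with property~(\ref{eq:unique_clfrule}) — which excludes equal extents for distinct rules — this shows that $\rho_1'\supseteq\rho_2'$ holds only when $\rho_1\subseteq\rho_2$, giving a bijection of posets that exchanges the two orderings.

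To transport the semilattice structure, fix $r_1,r_2\in DT$ and let $r=\mathrm{LCA}(r_1,r_2)$ in the tree. I claim that $\phi(r)=\phi(r_1)\vee\phi(r_2)$ computed inside $L_{DT}$: $\phi(r)$ is an upper bound in the concept order because $r$ is a DT ancestor of both $r_i$; and any other upper bound in $L_{DT}$ is $\phi(r^*)$ for some common ancestor $r^*$ of $r_1,r_2$, which by definition of LCA satisfies $r^*\le r$ in $DT$ and hence $\phi(r^*)\ge\phi(r)$ in $L_{DT}$. The extra element $(M',M)$ is the global bottom needed only to close the collection into a lattice and has no $DT$ preimage, matching the fact that two incomparable leaves have no common upper bound in $DT$. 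The step I expect to be the main obstacle is the reverse order statement — extent containment forcing premise comparability — which hinges on the disjointness of extents of incomparable binary-tree nodes together with~(\ref{eq:unique_clfrule}); once that is in hand, the LCA-to-join correspondence is essentially formal.
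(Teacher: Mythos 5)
Your proposal is correct and follows essentially the same route as the paper: the core of both arguments is the two-case analysis showing that comparable premises yield comparable concepts while incomparable tree nodes diverge at a split (yielding contradictory attributes, hence disjoint extents and incomparable concepts). You additionally spell out bijectivity via property~(\ref{eq:unique_clfrule}) and the LCA-to-join correspondence, which the paper leaves implicit, but this is elaboration rather than a different method.
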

\begin{proof}
Given two classification rules $(\rho_1, \hat{y}_{\rho_1}),(\rho_1, \hat{y}_{\rho_1}) \in DT$ let us consider two cases:
\begin{enumerate}
    \item $\rho_1 \subseteq \rho_2 \Rightarrow (\rho_1', \rho_1'') \leq (\rho_2', \rho_2'')$
    \item $\rho_1 \not \subseteq \rho_2, \rho_2 \not \subseteq \rho_1 \Rightarrow \exists m \in M: m \in \rho_1, \neg m \in \rho_2 \\ \Rightarrow (\rho_1', \rho_1'') \not \leq (\rho_2', \rho_2''), (\rho_2', \rho_2'') \not \leq (\rho_1', \rho_1'')$
\end{enumerate}

Thus the formal concepts from the join-semilattice of $L_{DT}$ possess the same partial order as the classification rules from $DT$.
\end{proof}

Since we can construct a concept lattice from a decision tree and there is a union operation for concept lattices then we can construct a concept lattice which will correspond to a ``union'' of a number of independent decision trees (i.e. a random forest). 

\begin{proposition}
\label{prop:lattice_by_rf}
Let $\mathbb{K}=(G,M,I)$ be a formal context, $L(\mathbb{K})$ be a lattice of the context $\mathbb{K}$. A subset of formal concepts $L_{RF}(\mathbb{K})$ of the concept lattice $L(\mathbb{K})$
forming a lattice can be obtained from a random forest, i.e. from a set of $m$ decision trees constructed on subsets of a formal context $DT_i(K_i), i=1,...,m, K_i \subseteq \mathbb{K}$:
\begin{equation}
    L_{RF}(\mathbb{K}) = \bigcup_{i=1}^m L_{DT_i}(K_i)
\end{equation}
\end{proposition}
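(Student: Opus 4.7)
The plan is to establish Proposition~\ref{prop:lattice_by_rf} by reducing to Proposition~\ref{prop:lattice_by_dt} tree by tree and then arguing about the structure of the union. I would carry this out in two main steps.

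First, I would verify that every element of $L_{RF}(\mathbb{K})$ is a formal concept of $\mathbb{K}$. By Proposition~\ref{prop:lattice_by_dt} applied to each $DT_i$, the set $L_{DT_i}(K_i)$ consists of pairs $(\rho', \rho'')$ obtained by closing premises $\rho \subseteq M$ under the derivation operators of $\mathbb{K}$, together with the explicitly added pair $(M', M)$. Each such pair is a concept of $\mathbb{K}$ by construction, so the union $\bigcup_i L_{DT_i}(K_i)$ is a subset of $L(\mathbb{K})$.

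Second, I would show that $L_{RF}(\mathbb{K})$ is a lattice under the induced order. The key structural observation is that every $L_{DT_i}$ contains two common extremal elements: the concept arising from the empty premise at the root of the tree (the top of $L(\mathbb{K})$) and the added bottom $(M', M)$. Hence $L_{RF}$ inherits a global top and a global bottom shared by all constituent trees. For a finite poset with a top, it suffices to exhibit a meet for every pair of elements, since joins then arise automatically as the meet of the (non-empty) set of common upper bounds.

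The main obstacle, and the step I expect to require the most care, is verifying that meets of cross-tree pairs indeed lie in $L_{RF}$. For $c_1 \in L_{DT_i}$ and $c_2 \in L_{DT_j}$ with $i \neq j$, the meet computed in the ambient $L(\mathbb{K})$ need not belong to any individual $L_{DT_k}$, so a literal union of concept sets is not automatically a sublattice of $L(\mathbb{K})$. My plan is to argue that for such cross-tree incomparable pairs the greatest lower bound \emph{within} $L_{RF}$ collapses to the shared bottom $(M', M)$: any candidate concept from some tree lying simultaneously below $c_1$ and $c_2$ would have to be a refinement of both premises, and when the trees split on independent attributes this refinement collides with the branching constraints of both trees and is forced down to $(M', M)$. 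Making this collapsing argument precise, and ruling out intermediate cross-tree meets when some trees do happen to share attribute splits, is the crux of the proof.
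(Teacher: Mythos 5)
The paper gives no proof of Proposition~\ref{prop:lattice_by_rf} at all: it is justified only by the preceding one-sentence remark that a lattice can be built from each tree and that ``there is a union operation for concept lattices''. Your attempt is therefore more ambitious than the source, and you have correctly isolated the real difficulty: a set-theoretic union of subsets of $L(\mathbb{K})$, each of which is a lattice under the induced order, is not automatically a lattice. Two remarks on your first step before the main point. First, $(M',M)$ belongs to each $L_{DT_i}$ only because Proposition~\ref{prop:lattice_by_dt} adjoins it explicitly, not because the tree produces it; that is fine, but worth saying. Second, each $DT_i$ is built on a subcontext $K_i \subseteq \mathbb{K}$, so the derivation operators implicit in $L_{DT_i}(K_i)$ are those of $K_i$, and the resulting pairs are concepts of $K_i$, not in general of $\mathbb{K}$. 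Your claim that $L_{RF}(\mathbb{K}) \subseteq L(\mathbb{K})$ only goes through if every premise is re-closed in the full context $\mathbb{K}$; you assume this silently and should state it as a hypothesis.

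The genuine gap is the ``collapsing'' argument for cross-tree meets, and it cannot be repaired, because the claim it rests on is false in general. Trees in a random forest routinely split on overlapping attribute sets, so for incomparable $c_1=(A_1,B_1)\in L_{DT_i}$ and $c_2=(A_2,B_2)\in L_{DT_j}$ the common lower bounds in $L_{RF}$ need not reduce to $(M',M)$: any node of any tree whose closed premise contains $B_1\cup B_2$ supplies another one. Worse, two such elements, say with intents $(B_1\cup B_2\cup\{x\})''$ and $(B_1\cup B_2\cup\{y\})''$ for distinct attributes $x,y$, can be incomparable, while the concept with intent $(B_1\cup B_2)''$ that would dominate both need not appear in any $L_{DT_k}$. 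Then the set of common lower bounds of $\{c_1,c_2\}$ in $L_{RF}$ has no greatest element, the meet does not exist, and $L_{RF}$ is not a lattice under the induced order. So the proposition, read literally as a statement about the plain union, is not something your strategy (or any strategy) can establish; it holds only after closing the union under the meet operation of $L(\mathbb{K})$ (equivalently, passing to the generated sublattice), or after weakening ``forming a lattice'' to ``forming a poset'' --- which is in fact all that the subsequent Decision Lattice construction uses, since the prediction rule only ever consults the maximal elements of $DL^g$. A complete write-up should either add that closure step or prove the weaker poset statement and note the discrepancy with the proposition as stated.
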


The size of the lattice $L_{RF}$ is close to the size of the underlying random forest $RF$:  $|L_{RF}| \sim |RF| \sim O(mG\log(G))$, where $m$ is the number of trees in $RF$~\cite{SklearnRF}.
According to~\cite{SklearnDT} the time complexity of  constructing a decision tree is $O(MG^2\log(G))$. Several algorithms for constructing decision trees and random forests are implemented in various libraries and frameworks like Sci-kit learn\footnote{\url{https://scikit-learn.org/stable/modules/ensemble.html\#random-forests}}
, H2O\footnote{\url{http://h2o-release.s3.amazonaws.com/h2o/master/1752/docs-website/datascience/rf.html}}
, Rapids\footnote{\url{https://docs.rapids.ai/api/cuml/stable/api.html\#random-forest}}
. The latter is even adapted to be run on GPU.

Thus, our lattice construction algorithm has two steps:
\begin{enumerate}
    \item Construct a random forest $RF$
    \item Use random forest $RF$ to construct a concept lattice $L_{RF}$ (by eq. \ref{prop:lattice_by_rf})
\end{enumerate}

Both strong and weak side of this algorithm is that it relies on a supervised machine learning model, so it can be applied only if target  labels $Y$ are given. In addition, the result set of concepts may not be optimal w.r.t. any concept interestingness measure \cite{km2018}. Though it is natural to suppose that such set of concepts should be reasonable for supervised machine learning tasks.

\section{Decision Lattice}

Given a formal concept $(A, B)$ we can use its intent $B$ as a premise of a classification rule $(B, \hat{y}_B)$.

The target prediction $\hat{y}_B$ of such classification rule $(B, \hat{y}_B)$ can be estimated via an aggregation function over the set $\{y_g \mid \forall g \in A\}$. In what follows we use the average aggregation function:
\begin{equation}
    \hat{y}_B = \frac{1}{|A|} \sum_{\forall g \in A} y_g
    \label{eq:clf_rule}
\end{equation}

Let us define a decision lattice (DL) as a poset of classification rules.

\begin{definition}
Let $M$ be a set of attributes of a formal context $\mathbb{K}$ and $Y$ be a set of target values. Then a poset of classification rules is called a decision lattice $DL$ if a premise of each classification rule of $DL$ describes its own unique subset of objects (similar to $DT$ in equation \ref{eq:unique_clfrule})
\end{definition}

Decision lattice $DL$ can be constructed from a concept lattice $L$ as follows:
\begin{equation}
    DL = \{ (B, \hat{y}_B) \mid (A, B) \in L \}
\end{equation}
where $\hat{y}_B$ can be computed in various ways (we use the equation \ref{eq:clf_rule}).

To get a final prediction $\hat{y}_g$ for an object $g$ a decision tree $DT$ firstly selects all the classification rules $DT^g$ describing the object $g$. Then it uses the target prediction of the maximal classification rule from $DT^g$
\begin{align}
    DT^g &= \{(\rho, \hat{y}_\rho) \in DT \mid \rho \subseteq g'\} \\
    DT^g_{max} &= \{(\rho, \hat{y}_\rho) \in DT^g \mid \nexists (\rho_1, \hat{y}_{\rho_1}) \in DT^g: \rho \subset \rho_1\} \\
    \hat{y}_g &= \hat{y}_\rho, \quad (\rho, \hat{y}_\rho) \in DT^g_{max}
\end{align}

We use the same algorithm to get a final prediction $\hat{y}_g$ for an object $g$ by a decision lattice $DL$. The only difference is that when the subset $DT^g_{max}$ always contains only one classification rule a subset $DL^g_{max}$ may contain many. In this case we average the predictions of maximal classification rules $DL^g_{max}$:
\begin{align}
    \hat{y}_g &= \frac{1}{|DL^g_{max}|} \sum_{(\rho, \hat{y}_\rho) \in DL^g_{max}} \hat{y}_\rho
\end{align}

\begin{table}[]
    \centering
    \tabcolsep=0.06cm
     \fontsize{6}{8}\selectfont
\begin{tabular}{l||lllllllll||l}
{} &   \multicolumn{9}{l}{attributes M} & label Y \\
\hline
{} &   firm & smooth & \multicolumn{4}{l}{color} & \multicolumn{3}{l}{form} &  fruit \\
{objects G} &     &     & yellow &  green &   blue &  white &  round &   oval &  cubic &     \\
\hline
\hline
apple       &   &   X &   X &   &   &   &   X &   &   &   1 \\
grapefruit  &   &   &   X &   &   &   &   X &   &   &   1 \\
kiwi        &   &   &   &   X &   &   &   &   X &   &   1 \\
plum        &   &   X &   &   &   X &   &   &   X &   &   1 \\
toy cube    &   X &   X &   &   X &   &   &   &   &   X & 0  \\
egg         &   X &   X &   &   &   &   X &   &   X &   & 0  \\
tennis ball &   &   &   &   &   &   X &   X &   &   &  0 \\
\hline
mango       &   &   X &   &   X &   &   &   &   X &   &   1 \\
\end{tabular}
    \caption{\emph{Fruit context} and \emph{fruit} labels}
    \label{tab:fruit_ctx}
\end{table}

Let us consider the \emph{fruit context} $\mathbb{K}=(G,M,I)$ and \emph{fruit} label $Y$ presented in Table \ref{tab:fruit_ctx}.
We want to compare the way decision lattice makes an estimation of the label $y_{mango}$ of object \emph{mango} when this object is included in the train or the test context.

Figure \ref{fig_mangolattice} represents decision lattices constructed upon \emph{fruit context} with (on the left) and without (in the center) \emph{mango} object. In both cases we show only the classification rules which cover (describe) \emph{mango} object. 

The left picture represents a decision lattice with 8 classification rules and 1 single maximal classification rule: (``color\_is\_green \& form\_is\_oval \& smooth'', 1). Therefore we use this classification rule to predict the target label of \emph{mango}.

The picture in the center shows a decision lattice with 6 classification rules and 2 maximal classification rules: (``color\_is\_green \& form\_is\_oval'', 1),  \\(``form\_is\_oval \& smooth'', 1/2). We average the target predictions of these classification rules to get a final prediction of $3/4$ as shown in the picture on the right.

\begin{figure}[]
\includegraphics[width=\textwidth, height=40mm]{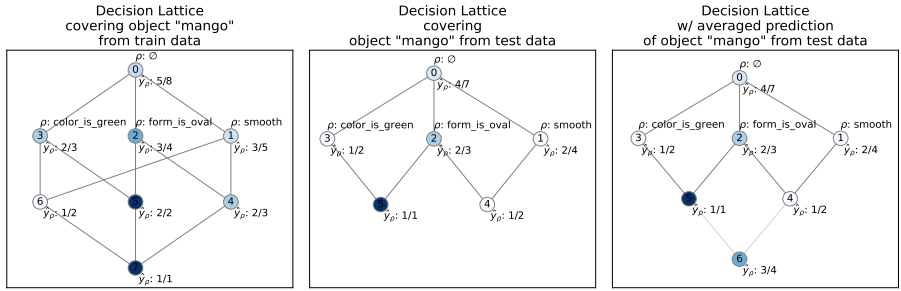}
\caption{Example of prediction of mango object} \label{fig_mangolattice}
\end{figure}

\section{Experiments}

We compare our decision lattice (DL) approach with the most popular 
machine learning models on real world datasets. One can reproduce the results by running Jupyter notebooks stored on GitHub \cite{dl_eval}. Decision lattice models are implemented in open-source Python library for FCA which is called FCApy and located in the same GitHub repository.

We use ensembles of 5 and 10 decision trees to construct decision lattice models DL\_RF\_5 and DL\_RF\_10, respectively. The same ensemble of 5 decision trees is used by random forest models RF\_5. Thus, we can compare prediction qualities of DL\_RF\_5 and RF\_5 based on the same set of decision trees (and, consequently, classification rules).

The non-FCA model we use for the comparison are decision tree (DT), random forest (RF) and gradient boosting (GB) from sci-kit learn library, gradient boostings from LightGBM (LGBM), XGBoost (XGB), CatBoost (CB) libraries. 

We also test Sofia algorithm \cite{Sofia} as a polynomial-time approach to construct a decision lattice DL\_Sofia. 
We compute only 100 of most stable concepts by Sofia algorithm because of its time inefficiency.

Metadata of the datasets is given in Table \ref{tab:dataset_desc}.

\begin{savenotes}
\begin{table}[]
\parbox{.25\linewidth}{
    \centering
    \tabcolsep=0.06cm
     \fontsize{6}{8}\selectfont
\begin{tabular}{l|rrrr}
Dataset name & Task type &  \# Instances & \# Attrsibutes\\
\hline
adult \footnote{\url{https://archive.ics.uci.edu/ml/datasets/Adult}} & Bin. class. & 48842 & 14\\
amazon \footnote{\url{https://www.kaggle.com/c/amazon-employee-access-challenge/data}} & Bin. class. & 32770 & 10\\
bank \footnote{\url{https://archive.ics.uci.edu/ml/datasets/bank+marketing}} & Bin. class. & 45211 & 17 \\
breast \footnote{\url{https://archive.ics.uci.edu/ml/datasets/Breast+Cancer+Wisconsin+(Diagnostic)}} & Bin. class. & 569  & 32 \\
heart \footnote{\url{https://archive.ics.uci.edu/ml/datasets/heart+Disease}} & Bin. class. & 303 & 75 \\
kick \footnote{\url{https://www.kaggle.com/c/DontGetKicked/data?select=training.csv}} & Bin. class. & 72984 & 34\\
mammographic \footnote{\url{http://archive.ics.uci.edu/ml/datasets/mammographic+mass}} & Bin. class. & 961 & 6 \\
seismic \footnote{\url{https://archive.ics.uci.edu/ml/datasets/seismic-bumps}} & Bin. class. & 2584 & 19\\
\hline
boston \footnote{\url{https://archive.ics.uci.edu/ml/machine-learning-databases/housing}} & Regression & 506  & 14\\
calhouse \footnote{\url{https://scikit-learn.org/stable/datasets/real_world.html\#california-housing-dataset}} & Regression & 20640 & 8\\
diabetes \footnote{\url{https://scikit-learn.org/stable/datasets/toy_dataset.html\#diabetes-dataset}} & Regression & 442  & 10 \\

\end{tabular}
\caption{Description of the datasets}
    \label{tab:dataset_desc}
}
\hfill
\parbox{.45\linewidth}{

    \centering
    \tabcolsep=0.06cm
     \fontsize{6}{8}\selectfont
\begin{tabular}{l||rr|rr|rr||rr}
\toprule
{} & \multicolumn{2}{l}{boston} & \multicolumn{2}{l}{calhouse} & \multicolumn{2}{l}{diabetes} & \multicolumn{2}{l}{mean delta} \\
{} &  train &  test &    train &  test &    train &  test &     train &  test \\
model            &        &       &          &       &          &       &           &       \\
\hline
\hline
DL\_RF\_5          &   0.02 &  0.06 &     0.14 &  0.05 &     0.05 &  0.00 &      0.07 &  0.04 \\
\hline
DL\_RF\_10         &   0.01 &  0.07 &     0.13 &  0.04 &     0.01 &  0.01 &      0.05 &  0.04 \\
\hline
DL\_Sofia         &   0.29 &  0.20 &       &    &     0.40 &  0.11 &      0.35 &  0.16 \\
\hline
DT               &   0.00 &  0.05 &     0.00 &  0.09 &     0.00 &  0.12 &      0.00 &  0.09 \\
RF\_5             &   0.05 &  0.01 &     0.14 &  0.04 &     0.15 &  0.02 &      0.12 &  0.03 \\
RF               &   0.04 &  0.00 &     0.06 &  0.02 &     0.12 &  0.00 &      0.07 &  0.01 \\
GB               &   0.05 &  0.00 &     0.17 &  0.02 &     0.16 &  0.00 &      0.13 &  0.01 \\
LGBM             &   0.04 &  0.01 &     0.13 &  0.00 &     0.11 &  0.00 &      0.09 &  0.01 \\
CB               &   0.02 &  0.00 &     0.13 &  0.00 &     0.06 &  0.00 &      0.07 &  0.00 \\
\hline
\hline
best result      &   0.00 &  0.14 &     0.00 &  0.21 &     0.00 &  0.31 &      0.00 &  0.22 \\
\bottomrule
\end{tabular}
    \caption{Weighted Average Percentage Error (best model delta)}
    \label{tab:wape_delta}
}

\end{table}
\end{savenotes}

For each dataset we use 5-fold cross-validation. We compute F1-score to measure the predictive quality of classification and weighted average percentage error (WAPE) to that of regression. In Tables \ref{tab:f1_delta}--\ref{tab:wape_delta} we show the difference between the metric value of the model and the best obtained metric value among all methods.

As can be seen from Tables \ref{tab:f1_delta}--\ref{tab:wape_delta} DL\_RF model does not always show the best result among all the tested models, though its prediction quality is comparable to the state-of-the-art.

DL\_Sofia model shows the worst results. There may be 2 reasons for this. First, it uses only a hundred of concepts. Second, we use Sofia algorithm to find one of the most stable concepts, but not the ones which minimize the loss.

Figure \ref{fig_time} shows the time needed to construct a lattice by the sets of 5 ($DL\_RF\_5$) and 10 ($DL\_RF\_10$) decision trees and by Sofia algorithm ($DL\_Sofia$).
The lattice can be constructed in a time linear in the number of objects in the given data. 

\begin{table}[]
    \centering
    \tabcolsep=0.06cm
     \fontsize{6}{8}\selectfont
\begin{tabular}{l||rr|rr|rr|rr|rr|rr|rr|rr||rr}
\toprule
{} & \multicolumn{2}{l}{adult} & \multicolumn{2}{l}{amazon} & \multicolumn{2}{l}{bank} & \multicolumn{2}{l}{breast} & \multicolumn{2}{l}{heart} & \multicolumn{2}{l}{kick} & \multicolumn{2}{l}{mamm.} & \multicolumn{2}{l}{seismic} & \multicolumn{2}{l}{mean delta} \\
{} & train &  test &  train &  test & train &  test &  train &  test & train &  test & train &  test &      train &  test &   train &  test &     train &  test \\
model            &       &       &        &       &       &       &        &       &       &       &       &       &            &       &         &       &           &       \\
\hline
\hline
DL\_RF\_5          & -0.35 & -0.06 &  -0.01 & -0.00 & -0.41 & -0.16 &  -0.01 & -0.01 & -0.03 & -0.02 & -0.59 & -0.03 &      -0.03 & -0.02 &   -0.24 & -0.15 &     -0.21 & -0.05 \\
DL\_RF\_10         & -0.33 & -0.05 &  -0.01 & -0.00 & -0.37 & -0.14 &  -0.00 & -0.00 & -0.01 &  0.00 & -0.58 & -0.03 &      -0.01 & -0.02 &   -0.13 & -0.15 &     -0.18 & -0.05 \\
DL\_Sofia         &    &    &     &    &    &    &  -1.00 & -0.95 & -0.33 & -0.27 &    &    &      -0.87 & -0.72 &   -1.00 & -0.15 &     -0.80 & -0.52 \\
\hline
DT               &  0.00 & -0.10 &  -0.00 & -0.01 &  0.00 & -0.24 &   0.00 & -0.06 &  0.00 & -0.15 &  0.00 & -0.05 &      -0.00 & -0.08 &    0.00 &  0.00 &     -0.00 & -0.09 \\
RF\_5             & -0.35 & -0.05 &  -0.01 & -0.00 & -0.41 & -0.12 &  -0.01 & -0.01 & -0.05 & -0.07 & -0.60 & -0.02 &      -0.04 & -0.02 &   -0.36 & -0.07 &     -0.23 & -0.05 \\
RF               & -0.00 & -0.04 &   0.00 &  0.00 &  0.00 & -0.11 &   0.00 &  0.00 &  0.00 & -0.00 & -0.00 & -0.01 &       0.00 & -0.03 &   -0.00 & -0.12 &     -0.00 & -0.04 \\
GB               & -0.36 & -0.02 &  -0.01 & -0.00 & -0.47 &  0.00 &   0.00 & -0.01 & -0.07 & -0.01 & -0.62 & -0.00 &      -0.07 &  0.00 &   -0.42 & -0.09 &     -0.25 & -0.02 \\
LGBM             & -0.31 & -0.00 &  -0.01 & -0.00 & -0.32 & -0.04 &   0.00 & -0.02 &  0.00 & -0.03 & -0.60 & -0.00 &      -0.03 & -0.02 &   -0.00 & -0.11 &     -0.16 & -0.03 \\
CB               & -0.31 &  0.00 &  -0.01 & -0.00 & -0.31 & -0.05 &   0.00 & -0.01 & -0.01 & -0.01 & -0.59 &  0.00 &      -0.04 & -0.01 &   -0.33 & -0.13 &     -0.20 & -0.02 \\
\hline
\hline
best result         &  1.00 &  0.65 &   0.98 &  0.97 &  1.00 &  0.48 &   1.00 &  0.95 &  1.00 &  0.76 &  1.00 &  0.35 &       0.95 &  0.81 &    1.00 &  0.15 &      0.99 &  0.64 \\
\bottomrule
\end{tabular}
    \caption{F1 score (best model delta)}
    \label{tab:f1_delta}
\end{table}

\begin{figure}[]
\center{\includegraphics[width=0.7\textwidth, height=50mm ]{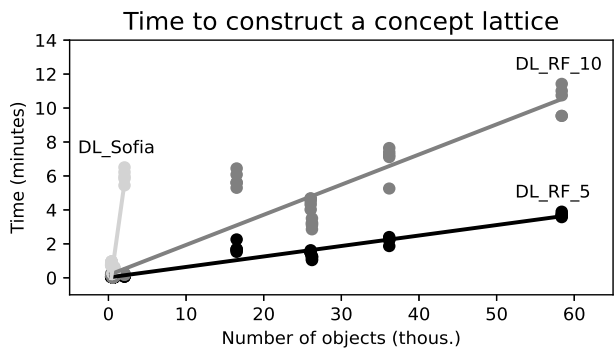}}
\caption{Time needed to construct a lattice} \label{fig_time}
\end{figure}


\section{Conclusions}
In this paper we have introduced a new concept-based method to classification and regression. The proposed method constructs concept-based classifiers obtained with decision trees and random forests.  This method is quite efficient and can be used for big datasets. We have shown that our approach is non-inferior to the  predictive quality of the state-of-the-art competitors.

In the future work we plan to extend the algorithm for constructing decision trees in the case of data given by pattern structures. 

\section*{Acknowledgments}

The work of Sergei O. Kuznetsov on the paper was carried out
at St. Petersburg Department of Steklov Mathematical Institute of Russian Academy of
Science and supported by the Russian Science Foundation grant no. 17-11-01276

%
%
%
\bibliographystyle{splncs04}
\bibliography{bibliography}

\end{document}